\pgfplotsset{compat=1.16}
\begin{document}
\title{Exact Non-Oblivious Performance of Rademacher Random Embeddings}
%
%
\author{Maciej Skorski\orcidID{0000-0003-2997-7539} \and
Alessandro Temperoni\orcidID{0000-0003-0272-6596}}

\authorrunning{M. Skorski and A. Temperoni}
%
\institute{University of Luxembourg, 4365 Esch-sur-Alzette, Luxembourg  \email{@uni.lu}}
%
\maketitle              
\begin{abstract}
This paper revisits the performance of 
Rademacher random projections, establishing novel statistical guarantees that are numerically sharp and non-oblivious with respect to the input data.

More specifically, the central result is the Schur-concavity property of Rademacher random projections with respect to the inputs. This offers a novel geometric perspective on the performance of random projections, while improving quantitatively on bounds from previous works. As a corollary of this broader result, we obtained the improved performance on data which is sparse or is distributed with small spread. This non-oblivious analysis is a novelty compared to  techniques from previous work, and bridges the frequently observed gap between theory and practise.

The main result uses an algebraic framework for proving 
Schur-concavity properties, which is a contribution of independent interest and an elegant alternative to derivative-based criteria. 

\keywords{Johnson-Lindenstrauss Lemma \and Rademacher Random Projections \and Schur-convexity}
\end{abstract}

\section{Introduction}

\subsection{Background and Related Work}

The seminal result of Johnson and Lindenstrauss~\cite{johnson1984extensions} quantifies the amazing performance of \emph{random linear maps as embeddings}: they map input data into a much smaller dimension (compression) while nearly maintaining the scalar products (almost isometrically). The \emph{strong quantitative guarantees}, a distinctive feature of this technique, enabled its application to a range of problems and areas, including nearest-neighbour search \cite{kleinberg1997two}, clustering \cite{dasgupta1999learning,cohen2015dimensionality,makarychev2022performance}, outlier detection \cite{aouf2012approximate}, ensemble learning \cite{cannings2017random}, adversarial machine learning~\cite{vinh2016training}, feature hashing in machine learning \cite{jagadeesan2019understanding}, numerical linear algebra \cite{sarlos2006improved,cohen2015dimensionality,clarkson2017low}, convex optimization \cite{zhang2013recovering}, differential privacy \cite{blocki2012johnson}, neuroscience \cite{ganguli2012compressed} and numerous others; for a comprehensive literature review we refer the reader to the recent survey~\cite{freksen2021introduction}.

The focus of this work is on numerical guarantees for the \emph{almost distance-preserving} property, which is formally stated as
\begin{align}\label{eq:near_isometry_informatl}
    \| \Phi x \|^2 \approx \|x\|^2\quad \textrm{with high probability},
\end{align}
where an appropriately sampled matrix $\Phi \in \mathbb{R}^{m\times n}$ represents the projection of an $n$-dimensional vector $x$ to $m$ dimensions ($m\ll n$ is desired), and the relative approximation error in \eqref{eq:near_isometry_informatl} is referred to as \emph{distortion}.

The long-line of research~\cite{frankl1988johnson,indyk1998approximate,achlioptas2003database,ailon2006approximate,matouvsek2008variants,dasgupta2010sparse,kane2014sparser,jagadeesan2019understanding,skorski2021johnson,skorski2022johnson} has incrementally established various guarantees for \eqref{eq:near_isometry_informatl}, in the form of \emph{distortion-confidence trade-offs}: while a small distortion ensures that the analytical task can be performed with a similar effect over the embedded data, high confidence guarantees with what probability it will happen.
Yet the quantitative analysis of the property \eqref{eq:near_isometry_informatl} has remained a difficult challenge, resulting in complex proofs simplified many times~\cite{cohen2018simple,jagadeesan2019simple}, crude statistical bounds (for example, sparse variants have an exponential gap with respect to the sharp no-go results~\cite{burr2018optimal}), and a lack of finite-dimensional insights (bounds are input-oblivious which widens the gap between theory predictions and empirical performance~\cite{venkatasubramanian2011johnson,fedoruk2018dimensionality}).

This work addresses the aforementioned gap by revisiting the most promising construction of \emph{Rademacher random projections}, which uses the following matrix
\begin{align}\label{eq:rademacher_matrix}
\Phi_{k,j} = \frac{1}{\sqrt{m}}r_{k,i},\quad r_{k,i}\sim^{IID} \{-1,+1\}.
\end{align}
More specifically, this paper solves the following problem:
\begin{center}
\begin{tcolorbox}[arc=3mm, colback=white]
Give a precise, non-asymptotic, non-oblivious analysis of random projections \eqref{eq:rademacher_matrix}.
\end{tcolorbox}
\end{center}
As recently~\cite{burr2018optimal} showed, Rademacher random projections are asymptotically \emph{dimension-optimal with exact constant}; this result improves upon a previous suboptimal bound of Kane and Nelson~\cite{kane2014sparser}. The statistical performance of Rademacher projections is superior to the sparse ones, as demonstrated empirically in \ref{fig:distortionVSembedsparsity}. Furthermore, the theoretical bounds for Rademacher random projections are much better than those available for sparse analogues~\cite{cohen2018simple}. The best, prior to this paper, analysis of \eqref{eq:rademacher_matrix} is given by Achlioptas in \cite{achlioptas2003database}. It is worth noting that Rademacher projections are also superior to their Gaussian counterparts; indeed, we know that they are dominated by the gaussian-based projections~\cite{achlioptas2003database}. The relation of statistical performance and input structure has not been understood in-depth yet; as for conceptually similar research, we note that recent results show that for sparse data one can improve the sparsity of random projections, gaining in computing time ~\cite{jagadeesan2019understanding,skorski2022johnson}.

\begin{figure}
\centering
\resizebox{.6\textwidth}{!}{
\begin{tikzpicture}
\begin{axis}[
    axis x line*=bottom,
    axis y line*=left,
    xtick={-1,0,1},
    ymode=log,
    xlabel={threshold $\epsilon$},
    ylabel={$\mathbf{P}\{distortion>\epsilon\}$},
    legend style={
        at={(0.98,0.98)},
        anchor=north east,
        mark=none,
    }
]
\addplot table [x=x, y=0.1, col sep=comma, mark=none] {ecdfs.csv};
\addplot table [x=x, y=0.25, col sep=comma, mark=none] {ecdfs.csv};
\addplot table [x=x, y=1.0, col sep=comma, mark=none] {ecdfs.csv};
\legend{$p=0.10$, $p=0.25$, $p=1.00$};
\end{axis}
\end{tikzpicture}}
\caption{The distortion tail (empirical CCDF) w.r.t. the embedding density parameter
$p$ which equals the fraction of non-zero elements in the matrix. 
Dense Rademacher projections ($p=1$) are numerically superior to their sparse counterparts ($p<1$). The comparison was done on a toy dataset.
}
\label{fig:distortionVSembedsparsity}
\end{figure}
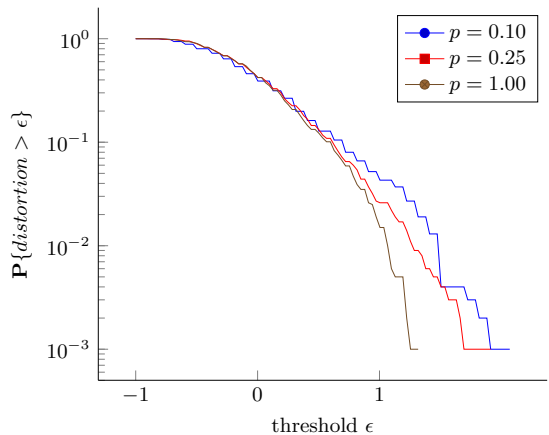

\subsection{Our Contribution}

Our study of the stochastic behavior of \eqref{eq:near_isometry_informatl} offers the following novel contributions:
\begin{description}
\item[(a)] \textbf{non-oblivious insights}, by quantifying the \emph{dependency on input spreadness}. Loosely speaking, we prove that more spread-out input data lead to heavier tails in the distributed distortion. In particular\footnote{We can think of sparse input as being the extreme opposite of the "well-spread" property.}, we obtain the \emph{improved performance on sparse input data}.
\item[(b)] \textbf{Schur-concavity framework}, used to provide the missing geometric intuitions for the performance of random projections. 
\item[(c)] \textbf{numerically optimal bounds}, which precisely capture the extreme behavior and cannot be improved by any constant. 
\item[(d)] \textbf{benchmarking} against previous bounds, both theoretically and empirically. 
By means of distortion, we measured how high-dimensional vectors are projected with different density to prove the strength of our bounds. 
\end{description}

\subsection{Organization}
The remainder of the paper is organized as follows:
\Cref{sec:prelim} introduces notation and technical notions used throughout the paper; then \Cref{sec:contrib} discusses and benchmarks novel results of this paper, and \Cref{sec:conclusion} concludes the work.
Parts of the theoretical analysis appear in the Appendix \ref{sec:proofs}. 

\begin{figure}
\centering
\resizebox{.6\textwidth}{!}{
\begin{tikzpicture}
\begin{axis}[
    axis x line*=bottom,
    axis y line*=left,
    xtick={0.2,0.4,0.6,0.8,1},
    ymode=log,
    xlabel={embedding density $p$},
    ylabel={quadratic chaos moment},
    legend style={
        at={(0.02,0.98)},
        anchor=north west
    },
]
\addplot table [x=p_value, y=0.2, col sep=comma] {quadratic_chaos.csv};
\addplot table [x=p_value, y=0.4, col sep=comma] {quadratic_chaos.csv};
\addplot table [x=p_value, y=0.6, col sep=comma] {quadratic_chaos.csv};
\addplot table [x=p_value, y=0.8, col sep=comma] {quadratic_chaos.csv};
\addplot table [x=p_value, y=1, col sep=comma] {quadratic_chaos.csv};
\legend{$sparsity=0$,$sparsity=0.2$,$sparsity=0.3$,$sparsity=0.4$,$sparsity=0.5$};
\end{axis}
\end{tikzpicture}}
\caption{Data-aware insights. We measured the theoretical moments against \emph{input data sparsity}. The plot shows that increased sparsity in the input data leads to smaller values of the moments. This result becomes more evident when related to the embeddings density parameter $p$. }
\end{figure}
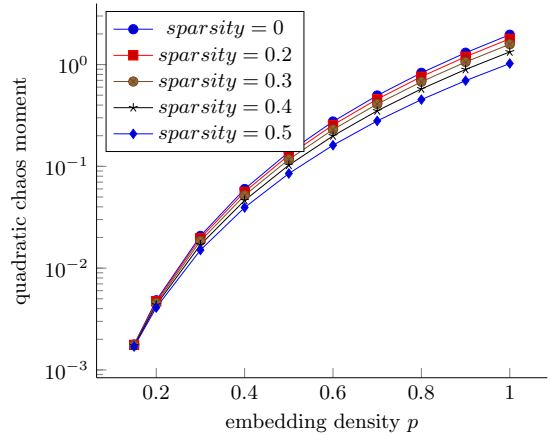


\section{Preliminaries}\label{sec:prelim}

We start by recalling some basic concepts from probability theory, algebra, combinatorics and optimization. 

Throughout the paper we work basic probability distributions: normal, binomial, and Rademacher. Given a vector $x$ we denote by $\|x\|$ its euclidean norm and by $\|x\|_0$ the number of its non-zero components; we also say that $x$ is $\ell$-sparse with $\ell=\|x\|_0$, for example $x=\left(\frac{1}{4},\frac{1}{4},\frac{1}{4},0\right)$ is $3$-sparse.

\emph{Partitions} represent a positive integer $q$ as a sum of positive integers. A non-decreasing and non-negative sequence $\lambda$ is a partition of $q$, denoted as $\lambda \vdash q$, when $\sum_i \lambda_i = q$; in the \emph{frequency notation} distinct parts are assigned frequencies, so that $\lambda = 1^{f_1}\ldots q^{f_q}$ where $\sum_i i f_i = q$. For example, we have $\lambda = 1^2 2  = (2,1,1)\vdash 4$. 

\emph{Monomial symmetric functions} for a given partition $\lambda =(\lambda_1,\ldots,\lambda_k)\vdash q$ are defined as the sum of all distinct monomials with exponent $\lambda$, that is
$\mathbf{m}_{\lambda}(x) = \sum_{i_1,\ldots,i_k}x_{i_1}^{\lambda_1}\cdots x_{i_k}^{\lambda_k}$
where $i_1,\ldots,i_k$ and monomials
$x_{i_1}^{\lambda_1}\cdots x_{i_k}^{\lambda_k}$ are distinct. For example, we have $\mathbf{m}_{(2,1,1,1)}(x_1,x_2,x_3,x_4) = x_{1}^{2} x_{2} x_{3} x_{4} + x_{1} x_{2}^{2} x_{3} x_{4} + x_{1} x_{2} x_{3}^{2} x_{4} + x_{1} x_{2} x_{3} x_{4}^{2}$. 

\emph{Elementary symmetric functions} are defined as $\mathbf{e}_k(x) = \sum_{i_1<\ldots<i_k}x_{i_1}x_{i_2}\cdots x_{i_k}$. Both monomial and elementary polynomials are a base of symmetric polynomials. For more details, we refer to the monographs \cite{alexanderson2020}.

The \emph{majorization order} on $n$-dimensional vectors is defined as follows:
we say that $x$ is dominated by $y$, denoted by $x\prec y$, 
if for their non-decreasing rearrangements $(x_i^\downarrow)$
and $(y_i^\downarrow)$ we have the inequality
$
 \sum_{i=1}^k x_i^\downarrow \geqslant \sum_{i=1}^k y_i^\downarrow$ for $k=1,\dots,n$ with equality when $k=n$; equivalently, $x$ can be produced from $y$ by a sequence of \emph{Robin-Hood operations}
which replace $x_i>x_j$ by $x_i\gets x_{i}-\epsilon,x_j \gets x_j+\epsilon$ for $\epsilon \in \left(0,\frac{x_i-x_j}{2}\right)$. 
 Intuitively, $x$ being dominated by $y$ means that $x$ is more spread-out/dispersed than $y$. For example, we have $\left(\frac{1}{4},\frac{1}{4},\frac{1}{4},\frac{1}{4}\right)\prec \left(\frac{1}{3},\frac{1}{3},0,\frac{1}{3}\right)$ (it takes 3 Robin-Hood transfers). 
 
 The \emph{Schur-convexity} of a function $f:\mathbb{R}^n\to \mathbb{R}$ is the following property: $x\prec y$ implies $f(x)\leqslant f(y)$; we speak of Schur-concavity when the inequality is reversed. Schur-convex or Schur-concave functions are necessary symmetric; symmetric function is Schur-convex if $\left(\frac{\partial f}{\partial x_i}-\frac{\partial f}{\partial x_j}\right)\left(x_i-x_j\right)\geqslant 0$ (Schur-Ostrowski criterion). For example, power sums $\sum_i x_i^q$ for $q\geqslant 1$ are Schur-convex. For more on \emph{majorization} and \emph{Schur-convexity} we refer to \cite{arnold2018majorization,shi2019schur}.

We define the \emph{moment domination} of a random variable $Y$ over $X$, denoted as $X\prec_{m} Y$, by requiring $\mathbf{E}X^q \leqslant \mathbf{E}Y^{q}$ for all positive integers $q$. In particular, it implies that MGF of $Y$ dominates the MGF of $X$, the majorization in the Lorentz stochastic order~\cite{arnold2018majorization}.



\section{Results}\label{sec:contrib}

\subsection{Main Result}
In the following result, we provide the promised numerically sharp, non-oblivious and geometrically insightful bounds for Rademacher random projections. 
In the (particularly interesting) case of sparse input, we obtain more explicit formulas involving binomial distributions.
\begin{theorem}[Sharp Moment Bounds for Rademacher Random Projections]\label{thm:main}
Let $\Phi$ be sampled according to the Rademacher scheme \eqref{eq:rademacher_matrix}, and define the distortion as
\begin{align}\label{eq:distortion}
    E(x) \triangleq \frac{ \|\Phi x \|^2 }{  \|x\|^2 } - 1.
\end{align}
Then the following holds true:
\begin{description}
\item[(a)] $E(x)$ has moments that are Schur-concave polynomials in $(x_i^2)$
\item[(b)] $E(x)$ is moment-dominated by $E_{*}$ defined as
\begin{align}
E_{*} & = \frac{1}{m}\sum_{i=1}^{m} (Z_i^2-1) 
\end{align}
where $Z_i$ are standardized binomial r.v.s.:
\begin{align}
   Z_i\sim^{IID} \frac{B-\mathbf{E}B}{\sqrt{\mathbf{Var}[B]}}, \ B\sim\mathsf{Binom}\left(\|x\|_0,\frac{1}{2}\right).
\end{align}
Equivalently,
\begin{align}
    \mathbf{E}E(x)^q \leqslant \mathbf{E}E_{*}^q
\end{align}
holds for $q=2,3,\ldots$ with equality when all components of the input $x$ are equal.
\end{description}
\end{theorem}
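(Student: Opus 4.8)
The plan is to establish (a) first and then derive (b) as an immediate corollary. Throughout I normalize $\|x\|=1$, which is harmless since the distortion $E(x)$ is scale-invariant, and I set $u_i \triangleq x_i^2$, so $u$ ranges over the probability simplex $\sum_i u_i = 1$. Writing $S_k=\sum_i r_{k,i}x_i$ for the $k$-th row, one has $\mathbf{E}[S_k^2]=\sum_i x_i^2=1$, hence
$$E(x) = \frac{1}{m}\sum_{k=1}^{m}\bigl(S_k^2 - 1\bigr), \qquad S_k^2 - 1 = \sum_{i\neq j} r_{k,i}r_{k,j}\,x_i x_j,$$
so each summand is a centered degree-two Rademacher chaos. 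First I would record that $\mathbf{E}[E(x)^q]$ is a \emph{symmetric polynomial in $u$}: expanding the $q$-th power and taking expectation over the signs, a monomial survives only if within each row every index carries even multiplicity, so every $x_l$ occurs to an even total power and the expectation is a polynomial in the $u_l$, symmetric by exchangeability. This reduces (a) to Schur-concavity of an explicit symmetric polynomial $F(u)=\mathbf{E}[E(x)^q]=\sum_\lambda c_\lambda\,\mathbf{m}_\lambda(u)$.

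The second step is the \emph{algebraic criterion} replacing Schur--Ostrowski. By the Hardy--Littlewood--P\'olya theorem majorization is generated by single Robin--Hood transfers, so $F$ is Schur-concave iff, fixing all but two coordinates $u_i,u_j$ and writing the restriction as a polynomial in the symmetric pair $\sigma_1=u_i+u_j$ and $\sigma_2=u_iu_j$, it is nondecreasing as $\sigma_2$ grows at fixed $\sigma_1$ — equivalently $\partial F/\partial\sigma_2\ge 0$ on the feasible region $0\le\sigma_2\le\sigma_1^2/4$. This is clean because a transfer toward balance raises the product $\sigma_2$ at fixed sum $\sigma_1$. I would then translate this into a condition on the coefficients: collecting, for a fixed exponent pattern on the remaining coordinates, the two-variable pieces $u_i^a u_j^b+u_i^b u_j^a$ arising from the various $\lambda$, each such piece contributes a definite sign to $\partial F/\partial\sigma_2$ (balanced exponents positive, spread-out ones negative), so the criterion becomes a set of weighted inequalities among the $c_\lambda$ favouring the more balanced partitions in the dominance order.

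The third and hardest step is verifying these coefficient inequalities for our specific $F$. Here one cannot merely invoke that each single chaos moment is Schur-concave, because $\mathbf{E}[E(x)^q]=\frac{1}{m^q}\sum_{a_1+\cdots+a_m=q}\binom{q}{a_1,\ldots,a_m}\prod_k g_{a_k}$ with $g_a\triangleq\mathbf{E}[(S^2-1)^a]$ is a \emph{sum of products} of such moments, and products of Schur-concave functions need not be Schur-concave. I would compute the $c_\lambda$ explicitly: each $g_a$ expands into monomials $\prod_l u_l^{d_l}$ whose coefficients are combinations of even multinomial coefficients $\binom{2a}{2d_1,\ldots,2d_n}$ (corrected by inclusion--exclusion for the off-diagonal constraint $i\neq j$), and I expect the required inequalities to reduce to elementary log-convexity comparisons of factorials such as $(2a)!\,(2b)!$ versus $(2a+2)!\,(2b-2)!$. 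Confirming that these comparisons carry the sign dictated by the criterion, uniformly over all partition pairs related by a transfer, is the main obstacle; the delicate point is bookkeeping how a single transfer on $u$ interacts simultaneously with every factor $g_{a_k}$.

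Finally, (b) follows directly from (a). Fixing the support of $x$ of size $\ell=\|x\|_0$, the uniform vector $u^\star$ equal to $1/\ell$ on that support is the minimum in the majorization order among probability vectors supported there, so $u^\star\prec u$ and Schur-concavity gives $\mathbf{E}[E(x)^q]\le \mathbf{E}[E^q](u^\star)$ for every $q$, with equality exactly when all nonzero components of $x$ are equal. Evaluating $E$ at $u^\star$, each row sum becomes $S=\frac{1}{\sqrt\ell}\sum_{i=1}^{\ell}r_i=(2B-\ell)/\sqrt\ell$ with $B\sim\mathsf{Binom}(\ell,\tfrac12)$, which is precisely the standardized binomial $Z=(B-\mathbf{E}B)/\sqrt{\mathbf{Var}[B]}$; hence $E^q(u^\star)=\mathbf{E}[E_*^q]$, yielding the stated moment domination $E(x)\prec_m E_*$.
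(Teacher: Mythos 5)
Your reduction to a symmetric polynomial in $u=(x_i^2)$, your two-coordinate criterion via $\sigma_1=u_i+u_j$, $\sigma_2=u_iu_j$, and your derivation of (b) from (a) (flat vector on the support is the majorization minimum; the row sum at that vector is exactly the standardized binomial) are all correct and consistent with the paper. The genuine gap is in your third step, which is the entire substance of part (a): you correctly observe that $\mathbf{E}[E(x)^q]=\frac{1}{m^q}\sum_{a_1+\cdots+a_m=q}\binom{q}{a_1,\ldots,a_m}\prod_k g_{a_k}$ is a sum of \emph{products} of chaos moments and that Schur-concavity is not closed under products in general, but you then propose to resolve this by computing all coefficients $c_\lambda$ and verifying log-convexity-type inequalities among factorials ``uniformly over all partition pairs related by a transfer'' --- and you explicitly admit you have not carried this out. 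As stated, the proof does not close, and the proposed route is far harder than necessary.

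The missing idea, which is how the paper proceeds, is that \emph{non-negative} Schur-concave functions \emph{are} closed under products and sums (a semi-ring: this follows from the identity $f(y)g(y)-f(x)g(x)=(f(y)-f(x))g(y)+f(x)(g(y)-g(x))$). Since one may take $x_i\geqslant 0$ without loss of generality, each $g_a=\mathbf{E}\bigl[(\sum_{i\neq j}x_ix_jr_ir_j)^a\bigr]$ is the expectation of a polynomial in symmetric $\pm1$ variables with non-negative coefficients, hence $g_a\geqslant 0$; combined with the Schur-concavity of each individual $g_a$ (the paper's Theorem 2, proved by restricting to two coordinates and running a recurrence on $G_k=\mathbf{E}[(x_1r_1+x_2r_2)^k]$ and $H_k=\mathbf{E}[(x_1r_1+x_2r_2)^kx_1x_2r_1r_2]$ down to the base cases $k=0,1$), the semi-ring closure immediately yields Schur-concavity of the full multinomial sum, with no coefficient bookkeeping. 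You would still owe a proof of Schur-concavity of the single-row chaos moments $g_a$ themselves; your proposal folds that into the same unverified combinatorial obstacle, whereas the two-variable recursion disposes of it in a few lines.
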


We briefly overview the proof of \Cref{thm:main} (see \Cref{fig:road_map}): it starts by a reduction to the dimension $m=1$, and writing the distortion as a Rademacher chaos of order 2; we then find extreme values of its moments geometrically, by means of \emph{Schur optimization}. Finally, these extreme values can be found explicitly and efficiently by linking them to binomial moments. 

\tikzstyle{my_style} = [draw, rectangle, rounded corners, minimum width={10cm}, 
fill={gray!10}]
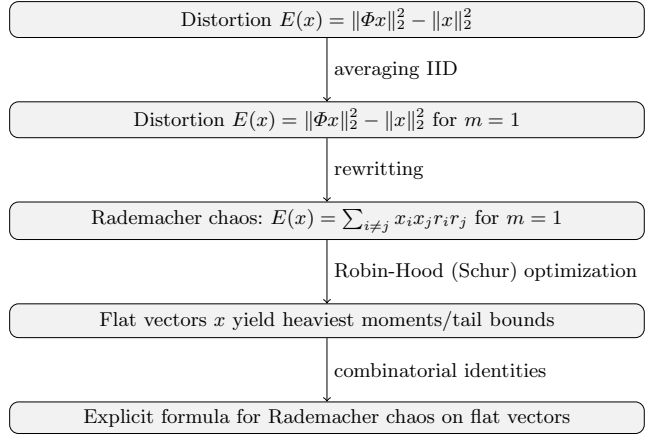
\begin{figure*}[!ht]
\centering
\resizebox{.7\textwidth}{!}{

\begin{tikzpicture}[node distance=2cm]
\node (high_dim) [my_style] {Distortion $E(x) = \|\Phi x\|_2^2-\|x\|_2^2$};
\node (1_dim) [my_style, below = 1cm of high_dim] {Distortion $E(x) = \|\Phi x\|_2^2-\|x\|_2^2$ for $m=1$};
\node (rademacher) [my_style, below = 1cm of 1_dim] {Rademacher chaos: $E(x) = \sum_{i\not=j}x_i x_j r_i r_j$ for $m=1$};
\node (rademacher_worst) [my_style, below = 1cm of rademacher] {Flat vectors $x$ yield heaviest moments/tail bounds};
\node (formula) [my_style, below = 1cm of rademacher_worst] {Explicit formula for Rademacher chaos on flat vectors};
\draw [<-] (1_dim) -- (high_dim) node[midway,right] {averaging IID};
\draw [<-] (rademacher) -- (1_dim) node[midway,right] {rewritting};
\draw [<-] (rademacher_worst) -- (rademacher) node[midway,right] {Robin-Hood (Schur) optimization};
\draw [<-] (formula) -- (rademacher_worst) node[midway,right] {combinatorial identities};
\end{tikzpicture}}
\caption{The proof roadmap for \Cref{thm:main}.}
\label{fig:road_map}

\end{figure*}

\subsection{Techniques: Proving Schur Convexity}

We present a useful framework for proving Schur convexity properties. It makes repeated use of few auxiliary facts to eventually reduce the task to a 2-dimensional problem. This is often easier than the classical approach of evaluating derivative tests.

\begin{lemma}\label{lemma:schur1}
The class of \emph{non-negative} Schur-convex (or concave) functions forms a semi-ring.
\end{lemma}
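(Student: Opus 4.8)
The plan is to verify that the class is closed under the two pointwise operations of addition and multiplication, since all remaining semi-ring axioms (commutativity and associativity of both operations, distributivity, and the existence of the additive and multiplicative identities) are inherited directly from the commutative semi-ring of non-negative real-valued functions under pointwise operations. I would also note at the outset that the constant functions $0$ and $1$ are non-negative and (vacuously, being constant) both Schur-convex and Schur-concave, so they lie in the class and serve as the required identities. Thus the entire content of the lemma reduces to the two closure statements.

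Closure under addition is immediate and does not even use non-negativity: fixing $x\prec y$, Schur-convexity of $f$ and $g$ gives $f(x)\leqslant f(y)$ and $g(x)\leqslant g(y)$, and adding these yields $(f+g)(x)\leqslant (f+g)(y)$. Since each of $f,g$ is symmetric (Schur-convex functions are necessarily symmetric), so is $f+g$, and hence $f+g$ is again Schur-convex.

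The substantive step is closure under multiplication, and this is precisely where non-negativity enters. Given $x\prec y$, I would chain the two inequalities through a common intermediate term: since $g(x)\geqslant 0$ and $f(x)\leqslant f(y)$, we get $f(x)g(x)\leqslant f(y)g(x)$; and since $f(y)\geqslant 0$ and $g(x)\leqslant g(y)$, we get $f(y)g(x)\leqslant f(y)g(y)$. Combining the two gives $(fg)(x)\leqslant (fg)(y)$, so $fg$ is Schur-convex (and it is non-negative and symmetric, keeping us inside the class). The Schur-concave case is handled verbatim with all inequalities reversed.

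The only genuine subtlety — which I would flag rather than treat as a real obstacle — is the indispensability of the non-negativity hypothesis: without it the product of two Schur-convex functions need not be Schur-convex, because multiplying an increasing-along-$\prec$ factor by a possibly-negative factor can reverse the monotonicity. Hence the semi-ring structure genuinely lives on the non-negative cone, and this is the reason the hypothesis appears in the statement. Beyond isolating this point, there is no hard computation; the value of the lemma is purely structural, allowing complicated Schur-concave moment polynomials to be assembled later from simple non-negative building blocks via repeated sums and products.
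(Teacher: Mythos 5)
Your proof is correct and takes essentially the same approach as the paper: your chain $f(x)g(x)\leqslant f(y)g(x)\leqslant f(y)g(y)$ is just the paper's telescoping identity $f(y)g(y)-f(x)g(x)=(f(y)-f(x))g(y)+f(x)(g(y)-g(x))$ written as two successive inequalities, with non-negativity used in exactly the same place. You are somewhat more thorough than the paper in also checking closure under addition and the identity elements, but the substantive content (closure of the product) is identical.
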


\begin{lemma}\label{lemma:schur2}
A multivariate function is Schur-convex (respectively, Schur-concave) if and only if it is symmetric and Schur-convex (respectively, Schur-concave) with respect to each pair of variables.
\end{lemma}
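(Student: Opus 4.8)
The plan is to prove both implications using the characterization of majorization via Robin-Hood operations (T-transforms) already recorded in \Cref{sec:prelim}, which expresses any relation $x \prec y$ as a finite chain of two-coordinate moves. I would treat only the Schur-convex case, since the Schur-concave case follows verbatim by reversing every inequality.

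For the forward direction I would assume $f$ is Schur-convex on the full space. Symmetry is immediate, because permuting coordinates leaves majorization invariant in both directions and therefore forces $f$ to be permutation-invariant. For the pairwise claim, I would fix two indices $i \neq j$, freeze the remaining coordinates at common values, and take any two-dimensional relation $(a,b) \prec (c,d)$ (so $a+b=c+d$). The key observation is that appending the same frozen coordinates to both sides preserves majorization, since a Robin-Hood transfer realizing $(a,b) \prec (c,d)$ acts identically on the enlarged vectors; Schur-convexity of $f$ on the full space then gives the desired inequality for the two-variable restriction.

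For the reverse direction I would assume $f$ is symmetric and Schur-convex in each pair of variables, and take $x \prec y$. By the Robin-Hood characterization there is a finite chain $y = y^{(0)} \to y^{(1)} \to \cdots \to y^{(N)} = x$ in which each step replaces a pair of coordinates $(y^{(k)}_i, y^{(k)}_j)$ with $y^{(k)}_i > y^{(k)}_j$ by the more balanced pair $(y^{(k)}_i - \epsilon,\, y^{(k)}_j + \epsilon)$, fixing all other coordinates. On the two active coordinates the new pair is majorized by the old one, so the pairwise Schur-convexity hypothesis yields $f(y^{(k+1)}) \leqslant f(y^{(k)})$, where symmetry is exactly what lets me apply the hypothesis to whichever pair and ordering the transfer uses. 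Chaining these inequalities along the whole sequence gives $f(x) \leqslant f(y)$.

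The hard part is not any single estimate but the bookkeeping in the reverse direction: I must invoke the T-transform decomposition with the correct orientation, check that every intermediate vector stays in the (permutation-invariant, convex) domain so each transfer is legitimate, and use symmetry precisely to collapse an arbitrary two-coordinate move onto the single pairwise hypothesis. One subtlety worth flagging is that Schur-convexity in each pair already entails symmetry in each pair of variables, hence full symmetry, so the symmetry hypothesis is partly redundant; I would nonetheless keep it explicit to match the classical Schur-Ostrowski formulation and to make the reduction transparent.
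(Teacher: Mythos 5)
Your proposal is correct and rests on exactly the same idea as the paper's (one-line) proof: the Robin-Hood/T-transform characterization of majorization recorded in \Cref{sec:prelim}, combined with the observation that each transfer touches only two coordinates. You simply spell out the two directions and the bookkeeping that the paper leaves implicit.
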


To demonstrate the usefulness of these facts, we sketch an alternative proof of a refined version of celebrated Khintchine's Inequality, due to Efron. This refinement plays an important role in statistics, namely in proving properties of popular Student-t tests.


\begin{corollary}[Refined  Khintchine Inequality~\cite{efron1968student}]
\label{cor:khintchine}
The mapping
\begin{align*}
x\rightarrow \mathbf{E}(\sum_i x_i r_i )^{q} 
\end{align*}
is a Schur-concave function of $(x_i^2)$. As a consequence for $\sigma=\|x\|_2$ we have
\begin{align*}
\mathbf{E}(\sum_{i=1}^{n} x_i r_i )^{q}  \leqslant \mathbf{E}\left(\frac{\sigma}{\sqrt{n}}\sum_{i=1}^{n} r_i \right)^{q} \leqslant \mathbf{E} \mathsf{Norm}(0,\sigma^2)^{q}.
\end{align*}
\end{corollary}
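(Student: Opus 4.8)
The plan is to treat $g(y) \triangleq \mathbf{E}(\sum_i \sqrt{y_i}\, r_i)^{q}$ as a function of the squared coordinates $y_i = x_i^2$ and to establish its Schur-concavity through the pairwise reduction of \Cref{lemma:schur2}, using \Cref{lemma:schur1} to handle an averaging step. First I would note that since each $r_i$ is symmetric, $\mathbf{E}(\sum_i x_i r_i)^q$ depends on $x$ only through $(x_i^2)$, so $g$ is well defined; moreover, for odd $q$ the expectation vanishes ($r_i \sim -r_i$), making $g\equiv 0$ trivially Schur-concave, so I may assume $q=2p$ is even. Expanding the multinomial and using $\mathbf{E}r_i^{\alpha_i}\in\{0,1\}$ then exhibits $g$ as a symmetric polynomial in $y$ with non-negative coefficients, which already supplies the symmetry required by \Cref{lemma:schur2}.

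By \Cref{lemma:schur2} it suffices to prove Schur-concavity in a single pair, say $(y_1,y_2)$, with the other coordinates frozen. I would condition on $R \triangleq \sum_{i\geq 3}\sqrt{y_i}\, r_i$ and average over the two free signs $r_1,r_2$. Writing $a=\sqrt{y_1}$, $b=\sqrt{y_2}$, the four equiprobable sign patterns collapse to
\begin{align}
\mathbf{E}_{r_1,r_2}(a r_1 + b r_2 + R)^{q} = \tfrac12\big[\phi_R(a+b)+\phi_R(a-b)\big], \quad \phi_R(t)\triangleq \tfrac12\big[(R+t)^q+(R-t)^q\big].
\end{align}
The crucial point is that $\phi_R$ is even, hence a genuine function $\tilde\psi_R$ of $t^2$, namely $\tilde\psi_R(w)=\sum_{j}\binom{q}{2j}R^{\,q-2j}w^{j}$. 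For even $q$ every exponent $q-2j$ is even, so all coefficients are non-negative, and therefore $\tilde\psi_R$ is non-negative, non-decreasing, and convex on $w\geq 0$.

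With $s=y_1+y_2$ fixed I would reparametrize by $c=2\sqrt{y_1 y_2}\in[0,s]$, noting $(a\pm b)^2=s\pm c$, so the pair-contribution equals $G_R(c)=\tfrac12[\tilde\psi_R(s+c)+\tilde\psi_R(s-c)]$. Convexity forces $G_R'(c)=\tfrac12[\tilde\psi_R'(s+c)-\tilde\psi_R'(s-c)]\geq 0$, so $G_R$ is non-decreasing; since $c$ \emph{increases} exactly as $(y_1,y_2)$ becomes more equal (i.e.\ $\prec$-smaller), this is precisely Schur-concavity in the pair. Each $G_R$ is also non-negative, so averaging over the finitely many sign patterns of $R$ is a non-negative combination of non-negative Schur-concave functions and stays Schur-concave by \Cref{lemma:schur1}. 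Symmetry together with \Cref{lemma:schur2} upgrades this to joint Schur-concavity, proving the first claim.

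Finally I would read off the two displayed inequalities. For fixed $\|x\|_2=\sigma$ the uniform vector $(\sigma^2/n,\dots,\sigma^2/n)$ is the minimum of the majorization order among all $y\geq 0$ with $\sum_i y_i=\sigma^2$, so Schur-concavity gives $\mathbf{E}(\sum_i x_i r_i)^q \leq \mathbf{E}(\tfrac{\sigma}{\sqrt n}\sum_i r_i)^q$. For the second inequality I would compare dimensions: padding with zeros exhibits $\mathbf{E}(\tfrac{\sigma}{\sqrt n}\sum_{i\leq n} r_i)^q$ as the value of $g$ in $\mathbb{R}^N$ at a vector majorizing the uniform one, so the sequence in $n$ is non-decreasing, and the central limit theorem identifies its limit as $\mathbf{E}\,\mathsf{Norm}(0,\sigma^2)^q$. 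I expect the main obstacle to be the convexity step — making precise that the even-$q$ reduction renders all coefficients of $\tilde\psi_R$ non-negative and that $c$ faithfully tracks the pairwise majorization order — whereas the second inequality needs only a routine uniform-integrability remark to justify convergence of moments in the CLT.
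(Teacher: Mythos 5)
Your proof is correct, and while it shares the outer skeleton with the paper's argument (well-definedness as a polynomial in $(x_i^2)$, the pairwise reduction via \Cref{lemma:schur2}, isolation of the two free coordinates with non-negative weights via \Cref{lemma:schur1}, and the identical derivation of the two displayed inequalities from the uniform vector, the nesting $\tfrac{1}{n+1}\mathbf{1}_{n+1}\prec\tfrac{1}{n}\mathbf{1}_n$, and the CLT), the core two-variable step is genuinely different. The paper proves Schur-concavity of $F_k=\mathbf{E}(x_1r_1+x_2r_2)^k$ by introducing the companion quantity $G_k=\mathbf{E}(x_1r_1+x_2r_2)^kx_1x_2r_1r_2$, deriving the coupled recurrence $F_k=(x_1^2+x_2^2)F_{k-2}+2G_{k-2}$, $G_k=(x_1^2+x_2^2)G_{k-2}+2x_1^2x_2^2F_{k-2}$, and inducting down to $k\in\{0,1\}$. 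You instead condition on the realization of $R=\sum_{i\ge 3}x_ir_i$, collapse the four sign patterns of $(r_1,r_2)$ into $\tfrac12[\tilde\psi_R(s+c)+\tilde\psi_R(s-c)]$ with $s=y_1+y_2$, $c=2\sqrt{y_1y_2}$, and read off monotonicity in $c$ from the convexity of $\tilde\psi_R$ (whose coefficients $\binom{q}{2j}R^{q-2j}$ are non-negative precisely because $q$ is even, which is why your explicit dismissal of odd $q$ is needed); since $c$ parametrizes the two-point majorization order exactly, this gives the pair Schur-concavity in one shot. Your route is non-inductive and arguably more transparent about \emph{why} balancing helps (a convexity--symmetrization identity), whereas the paper's recurrence machinery is reused essentially verbatim in the proof of \Cref{thm:schur_concave} for the quadratic chaos, so it is more uniform with the rest of the paper; you are also slightly more careful than the paper at the final step, noting that uniform integrability is what upgrades the CLT to convergence of $q$-th moments.
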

\begin{proof}
The symmetry with respect to $(x_i)$ is obvious. Applying 
the multinomial expansion to $(\sum_i x_i r_i )^{q}$, taking the expectation and using the symmetry of Rademacher random variables, we conclude that $\mathbf{E}(\sum_i x_i r_i )^{q} $ is polynomial in variables $(x_i^2)$. By \Cref{lemma:schur2}, it suffices to prove the Schur-concavity property for $x_1^2,x_2^2$. By the binomial formula and the independence of $r_1,r_2$ from $(r_i)_{i>2}$, we see that $\mathbf{E}(\sum_i x_i r_i )^{q}  = 
\sum_{k}\binom{q}{k}\mathbf{E}(\sum_{i>2}x_i r_i)^{q-k}\cdot \mathbf{E}(x_1r_1+x_2r_2)^k$ is a combination of expressions
$\mathbf{E}(x_1r_1+x_2r_2)^k$ with coefficients $c_k = \binom{q}{k}\mathbf{E}(\sum_{i>2}x_i r_i)^{q-k}$ that are independent of $x_1,x_2$ and non-negative due to the symmetry of $r_i$. By \Cref{lemma:schur1}, it suffices to prove that
$F_k\triangleq \mathbf{E}(x_1r_1+x_2r_2)^k$  is a Schur-concave function of $x_1^2,x_2^2$. Define $G_k \triangleq \mathbf{E}(x_1r_1+x_2r_2)^k x_1 x_2 r_1r_2$. By $(x_1r_1+x_2r_2)^k = (x_1r_1+x_2r_2)^{k-2}(x_1^2+x_2^2+2x_1x_2r_1r_2)$ we have 
$F_k = (x_1^2+x_2^2)F_{k-2} + 2 G_{k-2}$ and 
$G_k = (x_1^2+x_2^2)G_{k-2}+2x_1^2x_2^2 F_{k-2}$. Since $x_1^2+x_2^2$ and $x_1^2 x_2^2$ are both Schur-concave in $x_1^2,x_2^2$, the Schur-concavity property of 
$F_{k},G_{k}$ is proven when it is proven for $k:=k-2$. By mathematical induction, it suffices to realize that $F_{0}=1,F_1 =0,G_1=1,G_2=x_1^2x_2^2$ are Schur-concave in $x_1^2,x_2^2$.

Let $\mathbf{1}_{n}$ be the vector of $n$ ones. The first inequality follows then by $\frac{\sum_{i=1}^{n} x_i^2}{n}\cdot \mathbf{1}_n \prec (x_1^2,\ldots,x_n^2)$, and is clearly sharp. Since $\frac{1}{n+1}\mathbf{1}_{n+1}\prec \frac{1}{n}\mathbf{1}_n$ and the Schur-concavity implies that $\mathbf{E}(\sum_{i=1}^{n}r_i/\sqrt{n})^q$ increases with $n$, the second inequality follows by the CLT.
\end{proof}

\subsection{Techniques: Rademacher Chaoses}

Of independent interests are the techniques used in this work. The first result analyzes the quadratic Rademacher chaos  geometrically. It is similar in the spirit of the results of Efron~\cite{efron1968student} and Eaton~\cite{eaton1970note}, which however concern only a first-order Rademacher chaos.

\begin{theorem}[Schur-concavity of Rademacher chaoses]\label{thm:schur_concave}
Let $(r_i)$ be a sequence of independent Rademacher random variables. Then the Rademacher chaos moment
\begin{align}
    \mathbf{R}_q(x) \triangleq \mathbf{E}\left(\sum_{i\not=j} x_i x_j r_i r_j\right)^q
\end{align}
is a Schur-concave function of $(x_i^2)$ for every positive integer $q$.
\end{theorem}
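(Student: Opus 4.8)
The plan is to imitate the template of \Cref{cor:khintchine}: reduce to a two-variable problem with \Cref{lemma:schur2}, and there establish the required monotonicity. First, since $r_i^2=1$ we may rewrite the chaos as $\sum_{i\neq j}x_ix_jr_ir_j = Y^2-\|x\|^2$ with $Y=\sum_i x_ir_i$; expanding $(Y^2-\|x\|^2)^q$ and discarding the odd $r_i$-powers (which vanish by symmetry) shows, exactly as in the corollary, that $\mathbf{R}_q$ is a symmetric polynomial in $(x_i^2)$. By \Cref{lemma:schur2} it then suffices to prove Schur-concavity in a single pair $(x_1^2,x_2^2)$, with $x_3,\dots,x_n$ frozen.

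Next I would condition on $(r_i)_{i>2}$ and set $W=\sum_{i>2}x_ir_i$ (a symmetric random variable), $\xi=x_1+x_2$, $\eta=x_1-x_2$, and $\sigma^2=\|x\|^2$. Over the four sign patterns of $(r_1,r_2)$ the sum $Y$ takes exactly the values $W\pm\xi$ and $W\pm\eta$, so averaging and using $W\stackrel{d}{=}-W$ collapses the moment to $\mathbf{R}_q=\tfrac12\bigl(\Lambda(\xi)+\Lambda(\eta)\bigr)$, where $\Lambda(v)\triangleq \mathbf{E}_W\bigl[((W+v)^2-\sigma^2)^q\bigr]$ is an even function of $v$. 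The gain from this rewriting is geometric: $\xi^2+\eta^2=2(x_1^2+x_2^2)$ is invariant under majorization of $(x_1^2,x_2^2)$, while $\xi^2-\eta^2=4x_1x_2$ controls how balanced the pair is. Writing $\Lambda(v)=M(v^2)$ and $s=x_1^2+x_2^2$, a Robin--Hood transfer that balances $(x_1^2,x_2^2)$ is precisely an increase of $\delta:=2x_1x_2\in[0,s]$ in the expression $\mathbf{R}_q=\tfrac12\bigl(M(s+\delta)+M(s-\delta)\bigr)$.

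The theorem therefore reduces to the one-dimensional statement that $\delta\mapsto M(s+\delta)+M(s-\delta)$ is non-decreasing on $[0,s]$; equivalently, that $\mathbf{R}_q$ has non-negative coefficients when expanded as a polynomial in $x_1^2x_2^2$ (equivalently, that the even Taylor coefficients $M^{(2i)}(s)$, $i\geqslant 1$, are non-negative). I expect this positivity to be the main obstacle. It cannot be obtained configuration-by-configuration: for a frozen value of $W$ the integrand is not convex in $\delta$ (already for $q=1$ it is concave on one side), and for odd $q$ the term-wise estimates carry the wrong sign, so the averaging over the symmetric variable $W$ is genuinely essential. To overcome it I would expand $M(s+\delta)$ by the binomial theorem in the $W$-linear part $2Wv$ — which annihilates every odd-in-$W$ contribution — thereby reducing the claim to the non-negativity of specific combinations of the mixed moments $\mathbf{E}\bigl[W^{2k}(W^2-\mathbf{E}W^2)^{m}\bigr]$, which I would verify through a direct combinatorial identity; \Cref{lemma:schur1} then assembles the resulting non-negative Schur-concave pieces (and, after averaging back over $(r_i)_{i>2}$, yields the global statement).
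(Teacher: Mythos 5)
Your outline shares the paper's skeleton --- establish that $\mathbf{R}_q$ is a symmetric polynomial in $(x_i^2)$, invoke \Cref{lemma:schur2} to reduce to the pair $(x_1^2,x_2^2)$, and finish with \Cref{lemma:schur1} --- but handles the two-variable core by a genuinely different route. The paper keeps the decomposition $P(x_1r_1+x_2r_2)+x_1x_2r_1r_2+R$ with $P=\sum_{i>2}x_ir_i$, $R=\sum_{i,j\notin\{1,2\}}x_ix_jr_ir_j$, expands multinomially, observes $\mathbf{E}[P^{q_1}R^{q_3}]\geqslant 0$, and then disposes of the two-variable factors $G_k=\mathbf{E}(x_1r_1+x_2r_2)^k$ and $H_k=\mathbf{E}(x_1r_1+x_2r_2)^kx_1x_2r_1r_2$ by a two-term recurrence and induction on $k$. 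Your substitution $\xi=x_1+x_2$, $\eta=x_1-x_2$, giving $\mathbf{R}_q=\tfrac12(M(s+\delta)+M(s-\delta))$ with $\delta=2x_1x_2$, replaces that recurrence by a cleaner one-dimensional monotonicity statement; the identification of a Robin--Hood transfer with an increase of $\delta$ at fixed $s$ is correct, and the derivation of the $\tfrac12(\Lambda(\xi)+\Lambda(\eta))$ formula (using $W\overset{d}{=}-W$ and evenness of $y\mapsto(y^2-\sigma^2)^q$) is sound. One small imprecision: non-negativity of the even Taylor coefficients of $M$ at $s$ is \emph{sufficient} for, not equivalent to, monotonicity on $[0,s]$; it is the right target nonetheless.

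The genuine gap is that you stop exactly at the step that carries all the difficulty: the non-negativity of the mixed moments $\mathbf{E}\bigl[W^{2k}(W^2-\mathbf{E}W^2)^{a}\bigr]$ is asserted as something you ``would verify through a direct combinatorial identity,'' but no identity is given, and without it the proof does not close. The claim is true, and the paper's own positivity observation fills it: assuming WLOG $x_i\geqslant 0$ (signs can be absorbed into the $r_i$), we have $W^{2k}(W^2-\mathbf{E}W^2)^{a}=P^{2k}R^{a}$ with $P,R$ as above, which is a polynomial in the symmetric variables $(r_i)_{i>2}$ with non-negative coefficients; since the expectation of any Rademacher monomial is $0$ or $1$, the expectation is non-negative. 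Feeding this into your multinomial expansion of $\bigl((W^2-\mathbf{E}W^2)+2Wv+(u-s)\bigr)^q$ shows that $M(s+\delta)$ is a polynomial in $\delta$ with non-negative coefficients (the factors $(s+\delta)^k$ and $\delta^c$ contribute only non-negative terms), hence so is its even part $\tfrac12(M(s+\delta)+M(s-\delta))$, which gives the required monotonicity in $\delta$. With that lemma supplied, your argument is complete and arguably more transparent than the paper's induction; without it, the proposal is a reduction, not a proof.
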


The second result is a recipe for explicitly computing the extreme moment values:
\begin{theorem}[Extreme moments of Rademacher Chaos]\label{thm:extreme_rademacher}
For any $x$ and $K=\|x\|_0$ the following holds:
\begin{align}
    \mathbf{R}_q(x) \leqslant \mathbf{R}_q(x^{*}),\quad x^{*} = 
    \underbrace{\left(\frac{\|x\|_2}{\sqrt{K}},\ldots,\frac{\|x\|_2}{\sqrt{K}}\right)}_{K \text{ times} },
\end{align}
and furthermore the explicit value of this bound equals 
\begin{align}
 \mathbf{R}_q(x^{*}) = \|x\|_2^{2q}\cdot  \mathbf{E}_{\bar{B}} ({\bar{B}}^2-1)^q,
\end{align}
where $\bar{B} = \frac{B-K/2}{\sqrt{K/4}}$ standardizes  the symmetric binomial distribution with $K$ trials $B$.
\end{theorem}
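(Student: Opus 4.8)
The plan is to split the statement into its two assertions and treat them in sequence: first the extremal inequality $\mathbf{R}_q(x)\leqslant \mathbf{R}_q(x^{*})$, which I would obtain essentially for free from the already-established Schur-concavity (\Cref{thm:schur_concave}), and then the closed-form evaluation of $\mathbf{R}_q(x^{*})$, which is a short direct computation.

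For the inequality, I would first observe that the chaos $\sum_{i\neq j}x_i x_j r_i r_j$ depends only on the \emph{nonzero} coordinates of $x$, since any vanishing factor kills the corresponding monomial; hence $\mathbf{R}_q(x)$ is a function of the $K=\|x\|_0$ nonzero squared entries $(x_{i_1}^2,\ldots,x_{i_K}^2)$ and I may work in dimension $K$. The flat target $x^{*}$ has squared coordinates $\tfrac{\|x\|_2^2}{K}\mathbf{1}_K$, and the standard majorization $\tfrac{\|x\|_2^2}{K}\mathbf{1}_K \prec (x_{i_1}^2,\ldots,x_{i_K}^2)$ holds because the flat profile is the most dispersed vector with the prescribed sum. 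Applying the Schur-concavity of $\mathbf{R}_q$ in the squared variables (\Cref{thm:schur_concave}) then reverses the order of the values and yields $\mathbf{R}_q(x^{*})\geqslant \mathbf{R}_q(x)$.

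For the explicit value, I would substitute the flat coordinates $x_i^{*}=\|x\|_2/\sqrt{K}$ into the chaos and factor out the common scalar, giving $\sum_{i\neq j}x_i^{*}x_j^{*}r_i r_j = \tfrac{\|x\|_2^2}{K}\sum_{i\neq j}r_i r_j$. Using $r_i^2=1$ I would rewrite the inner double sum as a perfect square minus a diagonal correction,
\[
\sum_{i\neq j} r_i r_j \;=\; \left(\sum_i r_i\right)^2 - K .
\]
Writing $r_i = 2b_i-1$ with $b_i$ i.i.d.\ Bernoulli$(1/2)$ gives $\sum_i r_i = 2B-K$ for $B\sim\mathsf{Binom}(K,\tfrac12)$, so that $\sum_i r_i = \sqrt{K}\,\bar{B}$ with $\bar{B}=(B-K/2)/\sqrt{K/4}$ precisely the claimed standardization. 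Consequently $\left(\sum_i r_i\right)^2 - K = K(\bar{B}^2-1)$, the whole chaos collapses to the random variable $\|x\|_2^2(\bar{B}^2-1)$, and raising to the $q$-th power and taking expectation (pulling out the deterministic factor $\|x\|_2^{2q}$) delivers $\mathbf{R}_q(x^{*}) = \|x\|_2^{2q}\,\mathbf{E}_{\bar{B}}(\bar{B}^2-1)^q$.

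I do not anticipate a serious obstacle here, since the hard analytic work is carried by \Cref{thm:schur_concave}. The only point demanding care is the reduction to the support: one must confirm that restricting to the $K$ nonzero coordinates changes neither the chaos nor the $\ell_2$ norm, so that the flat comparison vector is correctly taken on exactly $K$ coordinates rather than on the ambient dimension $n$. This is exactly what makes the resulting bound sparsity-aware, and it is what ties the extremal value to a symmetric binomial on $K$ trials.
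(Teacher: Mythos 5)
Your proposal is correct and follows essentially the same route as the paper: the inequality comes from \Cref{thm:schur_concave} combined with the majorization of the flat profile $\tfrac{\|x\|_2^2}{K}\mathbf{1}_K$ by $(x_i^2)$, and the explicit value comes from the square completion $\sum_{i\neq j}r_ir_j=\left(\sum_i r_i\right)^2-K$ together with the substitution $r_i=1-2b_i$ linking $\sum_i r_i$ to a symmetric binomial. The only difference is that you evaluate $\mathbf{R}_q(x^{*})$ via the pointwise distributional identity $\sum_{i\neq j}r_ir_j = K(\bar{B}^2-1)$ and take $q$-th moments directly, which is a cleaner shortcut than the paper's term-by-term computation of $\mathbf{E}\left(\sum_i r_i\right)^q$ followed by a binomial expansion and resummation.
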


\subsection{Numerical Comparison}



The presented result is \emph{numerically optimal} and \emph{captures input sparsity}. It should be compared against the bounds from \cite{achlioptas2001database} and the no-go result from \cite{burr2018optimal}, as illustrated in \Cref{tab:compare}. To see that  our bounds are better than those in \cite{achlioptas2001database}, it suffices to use the Gaussian majorization argument to obtain a weaker bound
$E(x) \prec_m \frac{\sum_{i=1}^{m}(N_i^2-1)}{m}$ where $N_i$ are independent standard normal random variables, and use known sub-gamma tail bounds for chi-square distributions (for example, those developed in the monograph on concentration inequalities~\cite{boucheron2003concentration}).

To validate our findings, we performed the following experiments on both synthetic and real-world datasets.

\begin{table*}[t!]
\resizebox{\textwidth}{!}{
\begin{tabular}{c|l}
\toprule
Author & Result \\
\midrule
\cite{burr2018optimal} & $\max_{x}\mathbf{P}[|E(x)|>\epsilon]\geqslant 2\exp\left(-\frac{m\epsilon^2(1+o(1))}{4})\right)$ when $m\gg \epsilon^{-2}, n\gg 1$ \\
\cite{achlioptas2001database}     &  $\mathbf{P}[|E(x)|>\epsilon]\leqslant 2\exp\left(-\frac{m\epsilon^2}{4}\left(1-\frac{2}{3}\epsilon\right)\right)$  \\
\textbf{this paper}    &  $E(x)\prec_m\frac{\sum_{i=1}^{m} Z_i^2-1}{m}$, $Z_i\sim^{IID} \frac{B-\mathbf{E}B}{\sqrt{\mathbf{Var}[B}},\ B\sim \mathsf{Binom}\left(\|x\|_0,\frac{1}{2}\right)$ \\
\bottomrule
\end{tabular}
}
\caption{Bounds from this work (\Cref{thm:main}) compared with the best prior bounds~\cite{achlioptas2001database} and the 
sharp no-go results~\cite{burr2018optimal}. Our bounds imply those from prior work by "normal majorization" arguments (see the supplementary material).}
\label{tab:compare}
\end{table*}

\subsubsection{Synthetic dataset}
\Cref{fig:inputsparsity_vs_moment} and 
\Cref{fig:inputsparsity_vs_tail} demonstrate numerical improvements. The input sparsity is the key: random projections are seen \emph{less distorted when input data is sparse}.

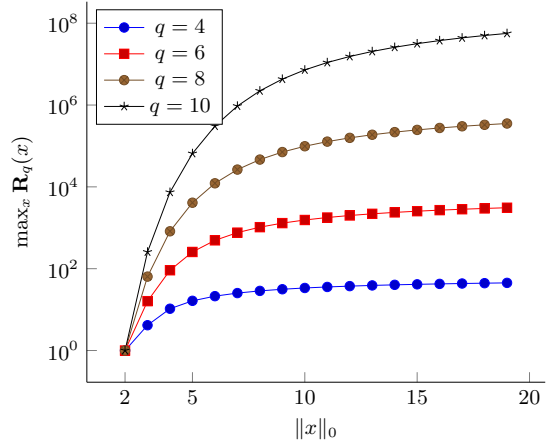
\begin{figure}[h]
\centering
\resizebox{.6\textwidth}{!}{
\begin{tikzpicture}
\begin{axis}[
    axis x line*=bottom,
    axis y line*=left,
    xtick={2,5,10,15,20},
    ymode=log,
    xlabel={$\|x\|_0$},
    ylabel={$\max_x \mathbf{R}_q(x)$},
    legend style={
        at={(0.02,0.98)},
        anchor=north west
    },
]
\addplot table [x=n, y=4, col sep=comma] {rademacher_extreme_moments.csv};
\addplot table [x=n, y=6, col sep=comma] {rademacher_extreme_moments.csv};
\addplot table [x=n, y=8, col sep=comma] {rademacher_extreme_moments.csv};
\addplot table [x=n, y=10, col sep=comma] {rademacher_extreme_moments.csv};
\legend{$q=4$,$q=6$,$q=8$,$q=10$};
\end{axis}
\end{tikzpicture}}
\caption{The more spread-out the input (controlled by sparsity $\|x\|_0$), the more distorted the projected output (captured by $\mathbf{R}_q(x)$, the Rademacher chaos moment). Utilizing the input dispersion improves probability bounds by orders of magnitude. Note: for normalization purposes, we assume $\|x\|_2=1$.}
\label{fig:inputsparsity_vs_moment}
\end{figure}

\begin{figure}[ht!]
\centering
\resizebox{.6\textwidth}{!}{
\begin{tikzpicture}
\begin{axis}[
    axis x line*=bottom,
    axis y line*=left,
    ymode=log,
    xlabel={$\epsilon$},
    ylabel={bound on $\mathbf{P}[|E(x)|>\epsilon]$},
    legend style={
        at={(0.98,0.98)},
        anchor=north east
    },
]
\addplot table [x=eps, y=new_10, col sep=comma] {bounds_compare.csv}; \addplot table [x=eps, y=new_20, col sep=comma] {bounds_compare.csv}; \addplot table [x=eps, y=new_30, col sep=comma] {bounds_compare.csv}; \addplot[mark=none,style=dashed] table [x=eps, y=old_10, col sep=comma] {bounds_compare.csv};
\legend{new($\ell=10$),new($\ell=20$),new($\ell=50$),old bounds};
\end{axis}
\end{tikzpicture}}
\caption{Capturing input-sparsity ($\ell=\|x\|_0$) improves the bounds on Rademacher random projections, as demonstrated by distortion  probability tails.
}
\label{fig:inputsparsity_vs_tail}
\end{figure}
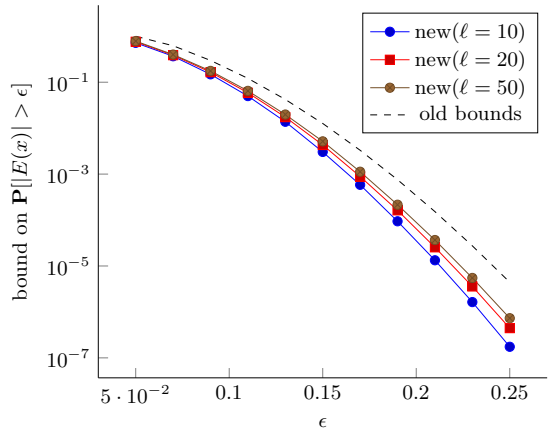

\begin{figure}[ht!]
\centering
\resizebox{.6\textwidth}{!}{
\begin{tikzpicture}
\begin{axis}[
    axis x line*=bottom,
    axis y line*=left,
    xtick={0.1,0.2,0.4,0.6,0.8,1.0},
    ymode=log,
    xlabel={embedding density $p$},
    ylabel={$distortion$},
    legend style={
        at={(0.02,0.98)},
        anchor=north west
    },
    legend pos=north east
]
\addplot table [x=n, y=mnist, col sep=comma] {real_world.csv};
\addplot table [x=n, y=optdigits, col sep=comma] {real_world.csv};
\addplot table [x=n, y=semeion, col sep=comma] {real_world.csv};
\addplot table [x=n, y=usps, col sep=comma] {real_world.csv};
\legend{$mnist$,$optdigits$,$semeion$,$usps$};
\end{axis}
\end{tikzpicture}}
\caption{The distortion measured w.r.t. the density of the embeddings shows that sparse data result in better bounds 
and proves that Rademacher projections are superior to sparse ones.
}
\label{fig:real-world}
\end{figure}
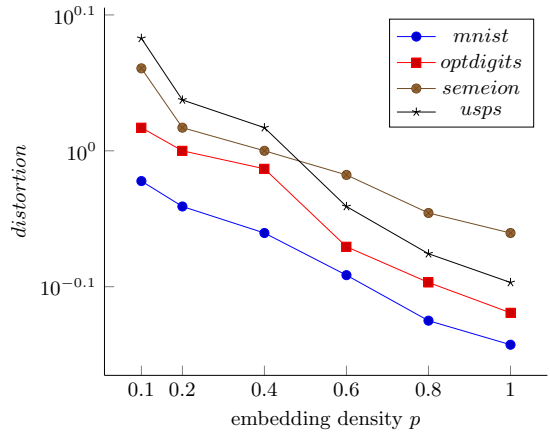

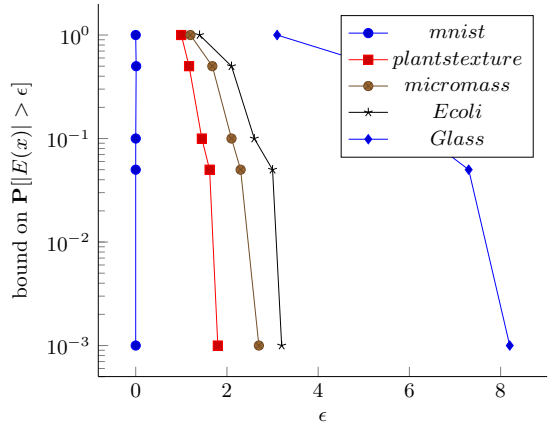
\begin{figure}[h!]
\centering
\resizebox{.6\textwidth}{!}{

\begin{tikzpicture}
\begin{axis}[
    axis x line*=bottom,
    axis y line*=left,
    xtick={0,2,4,6,8,10},
    ymode=log,
    xlabel={$\epsilon$},
    ylabel={bound on $\mathbf{P}[|E(x)|>\epsilon]$},
    legend style={
        at={(0.02,0.98)},
        anchor=north west
    },
    legend pos=north east
]
\addplot table [x=mnist, y = y, col sep=comma] {real_world_2.csv};
\addplot table [x=plantstexture, y=y, col sep=comma] {real_world_2.csv};
\addplot table [x=micromass, y=y, col sep=comma] {real_world_2.csv};
\addplot table [x=Ecoli, y=y, col sep=comma] {real_world_2.csv};
\addplot table [x=Glass, y=y, col sep=comma] {real_world_2.csv};
\legend{$mnist$, $plantstexture$, $micromass$, $Ecoli$, $Glass$};
\end{axis}
\end{tikzpicture}}
\caption{Measuring the distortion tail probability on real-world datasets confirms our theoretical findings: capturing the input-sparsity improves the bounds on Rademacher random projections. The datasets, from left to right, are displayed from the most sparse (mnist) to the least one (Glass). }
\label{fig:real-world_2}
\end{figure}

\subsubsection{Real-world datasets}

\Cref{fig:real-world} and \Cref{fig:real-world_2} show our findings. The results are validated with experiments performed on datasets from the SuiteSparse Matrix Collection (formerly the University of Florida Sparse Matrix Collection) available at  \url{https://sparse.tamu.edu/}. For these experiments, we selected matrices from machine learning datasets.

\section{Conclusions}\label{sec:conclusion}
We revisited the performance of Rademacher random projections, connecting the statistical guarantees with the input structure: for spreadness and, a special case, sparsity. The main result of this paper proves Schur-concavity properties, which makes the bounds numerically sharp and data aware (non-obliviuos) while giving a geometric perspective to the performance of the projections. 
We benchmarked our bounds both theoretically and empirically by measuring the distortion of the projected vectors against the original input data. As a result, dense projections are preferred, and they work incredibly well with sparse input data. We believe that our findings are of broader interest for a variety of statistical-inference applications.

\bibliographystyle{splncs04}
\bibliography{citations}

\appendix

\section{Proofs}\label{sec:proofs}

\subsection{Proof of Lemma 1}

Consider two non-negative functions $f,g$ and inputs $x\prec y$. Consider the identity
\begin{multline}
    f(y)g(y)-f(x)g(x) = (f(y)-f(x))\cdot g(y) + f(x)\cdot (g(y)-g(x)).
\end{multline}
If $f,g$ are Schur-convex then $f(y)-g(x)\geqslant 0$ and 
$g(y)-g(x)\geqslant 0$ and the whole expression is non-negative when $f,g$ are non-negative. This shows that $f\cdot g$ is also Schur-convex. The claim for Schur-concave functions follows analogously (the expression is then non-positive).

\subsection{Proof of Lemma 2}

The proof follows from the fact that $x$ is dominated by $y$ if and only if $x$ can be produced from $y$ by a sequence of \emph{Robin-Hood operations}, and the fact that Robin-Hood operations change only two fixed components of vectors. 

\subsection{Proof of Theorem 2}

\begin{proof}
Note that $\mathbf{R}_q$ is a polynomial in $x_i^2$ with integer coefficients, and thus a well-defined function of $(x_i^2)$. This follows by applying the multinomial expansion and noticing that monomials with odd exponents have expectation zero due to the symmetry of Rademacher distribution. $\mathbf{R}_q$ is obviously symmetric. By Lemma 2 it now suffices to validate the Schur-concativity for $x_1^2,x_2^2$ and any fixed choice of $(x_i)_{j>2}$.
Define the following expressions
\begin{align}
\begin{aligned}
P&=\sum_{i\not\in\{1,2\}}x_i r_i \\  R&= \sum_{i,j\not\in\{1,2\}}x_i x_j r_i r_j,
\end{aligned}
\end{align}
then our task is to prove the Schur-concativity of the function
\begin{align}
\mathbf{R}_q \triangleq \mathbf{E} \left( P(x_1 r_1 + x_2 r_2) + x_1 x_2 r_1 r_2 + R\right)^q,
\end{align}
with respect to $x_1^2,x_2^2$.

By the multinomial expansion we find that
\begin{align}
\begin{split}
\mathbf{R}_q \triangleq \sum_{q_1+q_2+q_3=q}\binom{q}{q_1,q_2,q_3}\bigg[ \mathbf{E}\left[ P^{q_1}R^{q_3}\right] \quad \mathbf{E}\left[ (x_1r_1+x_2r_2)^{q_1}(x_1x_2 r_1 r_2)^{q_2}\right] 
\bigg],
\end{split}
\end{align}
where we used the independence of $r_1,r_2$ on $(r_i)_{i>2}$ and thus also on $P,R$. Observe that 
$P^{q_1}R^{q_3}$ is, by definition and our assumption $x_i\geqslant 0$, a polynomial in symmetric random variables $r_i$ with non-negative coefficients. This observation shows that
\begin{align}
 \mathbf{E}\left[ P^{q_1}R^{q_3}\right] \geqslant 0,
\end{align}
and by Lemma 1 it suffices to prove that
\begin{align}
F\triangleq  \mathbf{E}\left[ (x_1r_1+x_2r_2)^{q_1}(x_1x_2 r_1 r_2)^{q_2}\right] 
\end{align}
is Schur-concave as a function of $x_1^2,x_2^2$ for any non-negative integers $q_1,q_2$.

To see that $F$ is indeed a well-defined function of $x_1^2,x_2^2$, note that it equals the expectation of a polynomial in the symmetric random variables $y_i= x_i r_i$; thus only monomials with even-degrees contribute, and the result is a polynomial in $y_i^2=x_i^2$. In fact, $F$ equals the sum of even-degree monomials in the expanded polynomial $(x_1+x_2)^{q_1}(x_1x_2)^{q_2}$.

We next observe that
\begin{align}\label{eq:reduction_2}
F = 
\begin{cases}
(x_1x_2)^{q_2}\mathbf{E}\left[ (x_1r_1+x_2r_2)^{q_1} \right] & q_2 \text{ even} \\
(x_1x_2)^{q_2-1}\mathbf{E}\left[ (x_1r_1+x_2r_2)^{q_1} x_1 x_2 r_1 r_2\right] & q_2 \text{  odd}
\end{cases}.
\end{align}
Note that $x_1 x_2$ is Schur-concave in non-negative $x_1,x_2$; indeed, the identity
$(x_1+\epsilon)(x_2-\epsilon) = x_1 x_2 + \epsilon(x_2-x_1-\epsilon)$ shows that Robin-Hood transfers increase the value. By Lemma 1 we conclude that $(x_1 x_2)^{k}$ is Schur concave in $x_1^2,x_2^2$ for non-negative even $k$. 
Thus, by  \Cref{eq:reduction_2} and Lemma 1 we conclude that it suffices to consider the case $q_2=1$, that is, to prove the Schur-concavity of the following two functions:
\begin{align}\label{eq:reduction_3}
G_k & \triangleq \mathbf{E}\left[ (x_1r_1+x_2r_2)^{k} \right] \\
H_k &\triangleq \mathbf{E}\left[ (x_1r_1+x_2r_2)^{k} x_1 x_2 r_1 r_2\right] .
\end{align}
with respect to $x_1^2,x_2^2$ for any non-negative integer $k$. 

Using the identity $(x_1r_1+x_2r_2)^{k} = (x_1r_1+x_2r_2)^{k-2}(x_1^2+x_2^2+2x_1x_2r_1r_2)$, we find the following recurrence relation
\begin{align}
G_k &= (x_1^2+x_2^2) G_{k-2}+2H_{k-2} \\
H_k &= 2x_1^2x_2^2 G_{k-2}+(x_1^2+x_2^2) H_{k-1} ,
\end{align}
valid for $k\geqslant 2$. Since
$x_1^2+x_2^2$ and $x_1^2 x_2^2$ are Schur-concave as functions of $x_1^2,x_2^2$, by Lemma 1 the concavity property proven for $k-2$ implies that it is valid also for $k$.
By induction, it suffices to verify the case $k=0$ and $k=1$. But we see that
\begin{align}
\begin{aligned}
G_0& = 1 \\
G_1& = 0 \\
H_0& = 1 \\
H_1& = 2x_1^2x_2^2 \\
\end{aligned}
\end{align}
are all Schur-concave as functions of $x_1^2,x_2^2$. This completes the proof.
\end{proof}

\subsection{Proof of Theorem 3}

Without loss of generality, we assume that $\|x\|_2=1$. From Theorem 2 and the fact that $(x^2_i)$ majorizes  $({x^{*}_i}^2)$ we obtain
\begin{align}\label{eq:rademacher_sup}
\begin{aligned}
\max_{\|x\|_0 \leqslant K}  \mathbf{E}\left(\sum_{i<j}x_i x_i r_i r_j \right)^q &= 
\mathbf{E}\left(\sum_{i<j}x^{*}_i x^{*}_i r_i r_j \right)^q &= \mathbf{E}\left(\frac{1}{K}\sum_{1\leqslant i<j \leqslant K} r_i r_j \right)^q,
\end{aligned}
\end{align}

Observe that $r_i = 1-2b_i$ where $(b_i)$ is a sequence of independent Bernoulli random variables with parameter $\frac{1}{2}$. Therefore,
\begin{multline}\label{eq:rademacher_linear}
    \mathbf{E}\left(\sum_{i=1}^{K}r_i\right)^q =\\
\begin{aligned}
    &=^{(a)} \sum_{k\in\mathbb{Z}} k^{q} \cdot \mathbf{P}\left\{\sum_{i=1}^{K}r_i = k\right\}   \\   
     &=^{(b)} \sum_{k} k^{q} \cdot \mathbf{P}\left\{\sum_{i=1}^{K}b_i = \frac{K-k}{2}\right\}  \\   
     &=^{(c)} \sum_{i=0}^{K} (K-2i)^q\cdot \mathbf{P}\left\{ \mathsf{Binom}\left(K,\frac{1}{2}\right) = i \right\} \\
     &=^{(d)} \frac{1}{2^{K}}\sum_{i=0}^{K} \binom{K}{i}(K-2i)^q \\
    & =^{(e)}     \frac{1}{2^K}\sum_{i}\binom{K}{i}(-K+2i)^q,
\end{aligned}
\end{multline}
where in (a) we use the fact that $\sum_i r_i$ takes integer values, (b) follows by the identity $r_i = 1-2b_i$, 
(c) follows by $\mathsf{Binom}(K,1/2) \sim \sum_{i=1}^{K} b_i$, (d) uses the explicit formula on the binomial probability mass function, and finally in (e) we substitute $i:= K-i$ and use the symmetry of binomial coefficients 
$\binom{K}{i}=\binom{K}{K-i}$.

Using the above formula, we further calculate 
\begin{multline}
\mathbf{E}\left(\sum_{1\leqslant i\not=j \leqslant K} r_i r_j \right)^q =\\
\begin{aligned}
& =^{(a)}  \mathbf{E}\left(\left(\sum_{i=1}^{K}r_i\right)^2-\sum_{i=1}^{K}r_i^2 \right)^q \\
  &=^{(b)} \sum_{j}\binom{q}{j}(-K)^{q-j}\mathbf{E}\left(\sum_{i=1}^{K}r_i\right)^{2j} \\
  & =^{(c)} \frac{1}{2^{K}}\sum_{i,j} \binom{q}{j} \binom{K}{i}(-K+2i)^{2j}  (-K)^{q-j}\\
  & =^{(d)} \frac{(-K)^q}{2^K}\sum_i \binom{K}{i} \left(1-\frac{(-K+2i)^2}{K}\right)^q,
\end{aligned}
\end{multline}
where (a) follows by the square sum completion, (b) follows by the binomial formula and $r_i^2=1$, (c) follows directly by  \Cref{eq:rademacher_linear}, and (d) is obtained by algebraic rearrangements.

Inserting \Cref{eq:rademacher_linear} into \Cref{eq:rademacher_sup}, we arrive at
\begin{multline}\label{eq:rademacher_sup_simpler}
 \max_{x:\|x\|_0\leqslant K} \mathbf{E}\left(\sum_{1\leqslant i\not=j \leqslant K} x_i x_j r_i r_j \right)^q  = \frac{1}{2^{K}} \sum_{i=0}^{K} \binom{K}{i} \left(\frac{(-K+2i)^2}{K}-1\right)^q.
\end{multline}

To simplify further, let $Z\sim \frac{\mathsf{Binom}\left(K,\frac{1}{2}\right)-\frac{K}{2}}{\sqrt{\frac{K}{4}}}$ be the standardization of the symmetric binomial distribution. 
Denoting $i\sim \mathsf{Binom}\left(K,\frac{1}{2}\right)$ we have $Z^2 \sim \frac{\left(i-\frac{K}{2}\right)^2}{{\frac{K}{4}}} = \frac{(-K+2i)^2}{K}$, and
we can rewrite \Cref{eq:rademacher_sup_simpler} as follows:
\begin{align}
    \max_{x:\|x\|_0\leqslant K}  \mathbf{E}\left(\sum_{1\leqslant i\not=j \leqslant K} x_i x_j r_i r_j \right)^q =  \mathbf{E}_Z \left(Z^2-1\right)^q,
\end{align}
which finishes the proof.

\subsection{Proof of Theorem 1}

We have to prove that for the distortion $E(\cdot)$ defined as in \begin{align}\label{eq:distortion}
    E(x) \triangleq \frac{ \|\Phi x \|^2 }{  \|x\|^2 } - 1.
\end{align}the following inequality holds true:
\begin{align}
E(x) \leqslant E(y),\quad (y_i^2) \prec (x_i^2).
\end{align}
The proof goes through several reduction steps until Schur-concavity of few simple functions.

We first observe that it suffices to prove that the moments of the expression
\begin{align}
x \rightarrow \|\Phi x\|^2-\|x\|^2,
\end{align}
are Schur-concavity with respect to $(x_i^2)$. Indeed, since $(y_i^2) \prec (x_i^2)$ implies $\|x\|^2 = \sum_i x_i^2 = \sum_i y_i^2 = \|y\|^2$ we have
  $\mathbf{E} E(x)^q \leqslant \mathbf{E} E(y)^q$ if and only if
$\mathbf{E}(\|\Phi x\|^2-\|x\|^2)^q \leqslant \mathbf{E}(\|\Phi y\|^2-\|y\|^2)^q$, by the definition of $E$.

We first prove that the distortion of $m$-dimensional projections is the average of $m$ IID distortions of $1$-D projections. Observe that
\begin{align}\label{eq:decomposition}
    \|\Phi x\|^2 - \|x\|^2 = \sum_{k=1}^{m}\left( (\Phi_k x)^2 -   \mathbf{E}(\Phi_k x)^2 \right),
\end{align}
where $\Phi_k$ is the $k$-th row of $\Phi$; this follows by $\mathbf{E}( \Phi_k x)^2 = \sum_{i} x_i^2\mathbf{Var}[\Phi_{k,i}] = \frac{1}{m}\|x\|^2$. Furthermore, the summands in \eqref{eq:decomposition} are independent and identically distributed:
\begin{align}\label{eq:average}
    (\Phi_k x)^2 - \mathbf{E}(\Phi_k x)^2 \sim \frac{1}{m}\sum_{i\not=j} x_i x_j r_i r_j.
\end{align}

Then we note that the Schur-concativity test can be done on the 1-D case. This follows because, due to the multinomial expansion applied to \Cref{eq:average}, the $q$-th moment of $m$-dimensional distortion is a multivariate polynomial in 1-D distortion moments of order $k\leqslant q$, with non-negative coefficients; the distortion moments are themselves non-negative, and by Lemma 1 and Theorem 2 we obtain the first part of the theorem.

Finally, applying Theorem 3 proves the second part.

\end{document}